\newcommand{\qed}{\hfill \ensuremath{\Box}}
\newtheorem{defx}{Definition}
\newtheorem{lemmx}{Lemma}
\newtheorem{thmx}{Theorem}
\newenvironment{proof}{{\bf Proof:}}{\qed}
\newcommand{\name}{KONG}
\begin{document}
\title{
\name: Kernels for ordered-neighborhood graphs}

\author[1]{Moez Draief}
\author[2]{Konstantin Kutzkov\thanks{Corresponding author: \tt{kutzkov@gmail.com}}}
\author[1]{Kevin Scaman}
\author[2]{Milan Vojnovic}
\affil[1]{Noah’s Ark Lab Paris, Huawei Technologies Ltd}
\affil[2]{London School of Economics, London, UK}

\maketitle

\begin{abstract}
We present novel graph kernels for graphs with node and edge labels that have ordered neighborhoods, i.e. when neighbor nodes follow an order. Graphs with ordered neighborhoods are a natural data representation for evolving graphs where edges are created over time, which induces an order. Combining convolutional subgraph kernels and string kernels, we design new scalable algorithms for generation of explicit graph feature maps using sketching techniques. We obtain precise bounds for the approximation accuracy and computational complexity of the proposed approaches and demonstrate their applicability on real datasets.  In particular, our experiments demonstrate that neighborhood ordering results in more informative features. For the special case of general graphs, i.e. graphs without ordered neighborhoods, the new graph kernels yield efficient and simple algorithms for the comparison of label distributions between graphs. 
\end{abstract}

\section{Introduction}

Graphs are ubiquitous representations for structured data and have found numerous applications in machine learning and related fields, ranging from community detection in online social networks~\cite{fortunato10} to protein structure prediction \cite{Rual05}. Unsurprisingly, learning from graphs has attracted much attention from the research community. Graphs kernels have become a standard tool for graph classification \cite{study}. Given a large collection of graphs, possibly with node and edge attributes, we are interested in learning a kernel function that best captures the similarity between any two graphs. The graph kernel function can be used to classify graphs using standard kernel methods such as support vector machines. 

Graph similarity is a broadly defined concept and therefore many different graph kernels with different properties have been proposed. Previous works have considered graph kernels for different graph classes distinguishing between simple unweighted graphs without node or edge attributes, graphs with discrete node and edge labels, and graphs with more complex attributes such as real-valued vectors and partial labels. For evolving graphs, the ordering of the node neighborhoods can be indicative for the graph class. Concrete examples include graphs that describe user web browsing patterns, evolving networks such as social graphs, product purchases and reviews, ratings in recommendation systems, co-authorship networks, and software API calls used for malware detection. 

\begin{figure}[t]
\center{
\definecolor{myblue}{RGB}{80,80,160}
\definecolor{mygreen}{RGB}{80,160,80}
\definecolor{myorange}{RGB}{255,127,0}

\begin{tikzpicture}[thick,
	scale=0.64,
	transform shape,
  fsnode/.style={fill=myblue},
  ssnode/.style={fill=green},
  osnode/.style={fill=myorange},
  every fit/.style={ellipse,draw,inner sep=-2pt,text width=1.6cm},
  -,shorten >= 1pt,shorten <= 1pt
]
\tikzset{vertex/.style = {shape=circle,draw,minimum size=1.5em}}
\tikzset{edge/.style = {->,> = latex}}
\node [fill=red,yshift=0cm,xshift=0cm, shape=circle, label = $v$, label=below:$A$](v1)[] {};
\node [fill=blue,yshift=-2cm,xshift=0.75cm, label=below:$B$](v2)[] {};
\node [fill=blue,yshift=0cm,xshift=2cm, label=below:$C$](v3)[] {};
\node [fill=blue,yshift=1.0cm,xshift=-2.4cm, label=below:$D$](v4)[] {};
\node [fill=blue,yshift=-3cm,xshift=1.6cm, label=below:$E$](v5)[] {};
\node [fill=blue,yshift=-3cm,xshift=-1cm, label=below:$F$](v6)[] {};
\node [fill=blue,yshift=1.5cm,xshift=2.5cm, label=below:$G$](v7)[] {};
\node [fill=blue,yshift=-1.3cm,xshift=2.6cm, label=below:$H$](v8)[] {};

\draw[edge] (v1) to (v2);
\draw[edge] (v1) to (v3);
\draw[edge] (v1) to (v4);
\draw[edge] (v1) to (v7);
\draw[edge] (v4) to (v7);
\draw[edge] (v2) to (v6);
\draw[edge] (v2) to (v5);
\draw[edge] (v3) to (v8);
\end{tikzpicture}
}
\caption{An illustrative example of an order-neighborhood graph: the neighhbor order is the letter alphabetical order. 
}\label{fig:example}
\end{figure}

The order in which edges are created can be informative about the structure of the original data. To the best of our knowledge, existing graph kernels do not consider this aspect. Addressing the gap, we present a novel framework for graph kernels where the edges adjacent to a node follow specific order. The proposed algorithmic framework \name, referring to Kernels for Ordered-Neighborhood Graphs, accommodates highly efficient algorithms that scale to both massive graphs and large collections of graphs. The key ideas are: (a) representation of each node neighborhood by a string using a tree traversal method, and (b) efficient computation of explicit graph feature maps based on generating $k$-gram frequency vectors of each node's string without explicitly storing the strings. The latter enables to approximate the explicit feature maps of various kernel functions using sketching techniques. Explicit feature maps correspond to $k$-gram frequency vectors of node strings, and sketching amounts to incrementally computing sketches of these frequency vectors. The proposed algorithms allow for flexibility in the choice of the string kernel and the tree traversal method. In Figure~\ref{fig:example} we present a directed labeled subgraph rooted at node $v$. A breadth first-search traversal would result in the string $ABCDGEFHG$ but other traversal approaches might yield more informative strings. 

\paragraph{Related work} Many graph kernels with different properties have been proposed in the literature. Most of them work with implicit feature maps and compare pairs of graphs, we refer to~\cite{study} for a study on implicit and explicit graph feature maps. 

Most related to our work is the Weisfeiler-Lehman kernel~\cite{weisfeiler_lehman} that iteratively traverses the subtree rooted at each node and collects the corresponding labels into a string. Each string is sorted and the strings are compressed into unique integers which become the new node labels. After $h$ iterations we have a label at each node. The convolutional kernel that compares all pairs of node labels using the Dirac kernel (indicator of an exact match of node labels) is equivalent to the inner product of the label distributions. However, this kernel might suffer from  {\em diagonal dominance} where most nodes have unique labels and a graph is similar to itself but not to other graphs in the dataset. The shortcoming was addressed in~\cite{deep_graph_kernels}. The kernel between graphs $G$ and $G'$ is computed  as $\kappa(G, G') = \Phi(G)^T\mathcal{M} \Phi(G')$ where $\mathcal{M}$ is a pre-computed matrix that measures the similarity between labels. The matrix can become huge and the approach is not applicable to large-scale graphs. While the Weisfeiler-Lehman kernel applies to ordered neighborhoods, for large graphs it is likely to result in many unique strings and comparing them with the Dirac kernel might yield poor results, both in terms of accuracy and scalability.

In a recent work~\cite{manzoor_et_al} presented an unsupervised learning algorithm that generates feature vectors from labeled graphs by traversing the neighbor edges in a predefined order. Even if not discussed in the paper, the generated vectors correspond to explicit feature maps for convolutional graph kernels with Dirac base kernels. Our approach provides a highly-scalable algorithmic framework that allows for different base kernels and different tree traversal methods.

Another line of research related to our work presents algorithms for learning graph vector representations~\cite{deepwalk,node2vec,graph_cnn}. Given a collection of labeled graphs, the goal is to map the graphs (or their nodes) to a feature space that best represents the graph structure. These approaches are powerful and yield the state-of-the-art results but they involve the optimization of complex objective functions and do not scale to massive graphs.  
\\\\
{\bf Contributions} The contributions of this paper can be summarized as follows:
\begin{itemize}[leftmargin=*]
\item To the best of our knowledge, this is the first work to focus and formally define graph kernels for graphs with ordered node neighborhoods. Extending upon string kernels, we present and formally analyse a family of graph kernels that can be applied to different problems. The \name \ framework presents algorithms that are efficient with respect to two parameters, the total number of graphs $N$ and the total number of edges $M$. We propose approaches that compute an explicit feature map for each graph which enables the use of linear SVMs  for graph classification, thus avoiding the computation of a kernel matrix of size $O(N^2)$. Leveraging advanced sketching techniques, an approximation of the explicit feature map for a graph with $m$ edges can be computed in time and space $O(m)$ or a total $O(M)$. We also present an extension to learning from graph streams using sublinear space $o(M)$.\footnote{Software implementation and data are available at \url{https://github.com/kokiche/KONG}.} 
\item For general labeled graphs without neighbor ordering our approach results in new graph kernels that compare the label distribution of subgraphs using widely used kernels such as the polynomial and cosine kernels. We argue that the approach can be seen as an efficient smoothing algorithm for node labelling kernels such as the Weisfeiler-Lehman kernel. An experimental evaluation on real graphs shows that the proposed kernels are competitive with state-of-the-art kernels, achieving better accuracy on some benchmark datasets and using compact feature maps.
\item The presented approach can be viewed as an efficient algorithm for learning compact graph representations. The primary focus of the approach is on learning explicit feature maps for a class of base kernels for the convolutional graph kernel. However, the algorithms learn vector embeddings that can be used by other machine learning algorithms such as logistic regression, decision trees and neural networks as well as unsupervised methods. 
\end{itemize}

{\bf Paper outline} The paper is organised as follows. In Section~\ref{sec:prel} we discuss previous work, provide motivating examples and introduce general concepts and notation.  In Section~\ref{sec:main} we first give a general overview of the approach and discuss string generation and string kernels and then present theoretical results. Experimental evaluation is presented in Section~\ref{sec:experiments}. We conclude in Section~\ref{sec:conclusion}.

\section{Preliminaries} \label{sec:prel}

\paragraph{Notation and problem formulation} The input is a collection $\mathbb{G}$ of tuples $(G_i, y_i)$ where $G_i$ is a graph and $y_i$ is a class. Each graph is defined as $G = (V, E, \ell, \tau)$ where $\ell: V \rightarrow \mathcal{L}$ is a labelling function for a discrete set $\mathcal{L}$ and $\tau$ defines ordering of node neighborhoods. For simplicity of exposition, we consider only node labels but all presented algorithms naturally apply to edge labels as well. The neighborhood of a node $v \in V$ is $N_v = \{u \in V: (v, u) \in E\}$. 
The ordering function $\tau_v: N_v \rightarrow \Pi(N_v)$ defines a fixed order permutation on $N_v$, where $\Pi(N_v)$ denotes the set of all permutations of the elements of $N_v$. Note that the order is local, i.e., two nodes can have different orderings for same neighborhood sets. 

\paragraph{Kernels, feature maps and linear support vector machines}
A function $\kappa: \mathcal{X} \times \mathcal{X} \rightarrow \mathbb{R}$ is a valid kernel if $\kappa(x, y) = \kappa(y,x)$  for $x,y \in \mathcal{X}$ and the kernel matrix $K \in \mathbb{R}^{m \times m}$ defined by $K(i, j) = \kappa(x_i, x_j)$ for any $x_1, \ldots, x_m \in \mathcal{X}$ is positive semidefinite. 
If the function $\kappa(x, y)$ can be represented as $\phi(x)^T\phi(y)$ for an explicit feature map $\phi: \mathcal{X} \rightarrow \mathcal{Y}$ where $\mathcal{Y}$ is an inner product feature space, then $\kappa$ is a valid kernel. Also, a linear combination of kernels is a kernel. Thus, if the base kernel is valid, then the convolutional kernel is also valid. 

We will consider base kernels where $\mathcal{X} = \mathbb{R}^n$ and $\phi: \mathbb{R}^n \rightarrow \mathbb{R}^D$. Note that $D$ can be very large or even infinite. The celebrated kernel trick circumvents this limitation by computing the kernel function for all support vectors. But this means that for training one needs to explicitly compute a kernel matrix of size $N^2$ for $N$ input examples. Also, in large-scale applications, the number of support vectors often grows linearly and at prediction time one needs to evaluate the kernel function for $O(N)$ support vectors. In contrast, linear support vector machines~\cite{linear_svm}, where the kernel is the vector inner product, run in linear time of the number of examples and prediction needs $O(D)$ time. An active area of research has been the design of scalable algorithms that compute low-dimensional approximation of the explicit feature map $z: \mathbb{R}^D \rightarrow \mathbb{R}^{d}$ such that $d \ll D$ and $\kappa(x, y) \approx z(\phi(x))^Tz(\phi(y))$~\cite{RahimiR07,fastfood,tensorsketch}.

\paragraph{Convolutional graph kernels} Most known graph kernels are instances of the family of convolutional kernels~\cite{convolutionkernels}. In their simplified form, the convolutional kernels work by decomposing a given graph $G$ into a set of (possibly overlapping) substructures $\Gamma(G)$. For example, $\Gamma(G)$ can be the set of 1-hop subtrees rooted at each node. The kernel between two graphs $G$ and $H$ is defined as $K(G,H)=\sum_{g \in \Gamma(G), h \in \Gamma(H)} \kappa(g, h)$ where $\kappa(g, h)$ is a base kernel comparing the parts $g$ and $h$. For example, $\kappa$ can be the inner product kernel comparing the label distribution of the two subtrees. Known graph kernels differ mainly in the way the graph is decomposed. Notable examples include the random walk kernel~\cite{gaertner_et_al}, the shortest path kernel~\cite{sp_kernel}, the graphlet kernel~\cite{graphlet_kernel} and the Weisfeiler-Lehman kernel~\cite{weisfeiler_lehman}. The base kernel is usually the Dirac kernel comparing the parts $g$ and $h$ for equality. 
   
Building upon efficient sketching algorithms, we will compute explicit graph feature maps. More precisely, let $\phi_\kappa$ be the explicit feature map of the base kernel $\kappa$. An explicit feature map $\Phi_\kappa$ is defined such that for any two graphs $G$ and $H$: 
$$K(G,H)=\sum_{g \in \Gamma(G), h \in \Gamma(H)} \kappa(g, h) = \sum_{g \in \Gamma(G), h \in \Gamma(H)} \phi_\kappa(g)^T\phi_\kappa(h)  = \Phi_\kappa(G)^T \Phi_\kappa(H).$$
When clear from the context, we will omit $\kappa$ and write $\phi(g)$ and $\Phi(G)$ for the explicit maps of the substructure $g$ and the graph $G$.

\paragraph{String kernels} The strings generated from subtree traversal will be compared using string kernels. Let $\Sigma^*$ be the set of all strings that can be generated from the alphabet $\Sigma$, and let $\Sigma^*_k \subset \Sigma^*$ be the set of strings with exactly $k$ characters. Let $t \sqsubseteq s$ denote that the string $t$ is a substring of $s$, i.e., a nonempty sequence of consecutive characters from $s$. The spectrum string kernel compares the distribution of $k$-grams between strings $s_1$ and $s_2$: $$\kappa_k \left(s_1, s_2\right) = \sum_{t \in \Sigma^*_k} \#_t(s_1)\#_t(s_2)$$ where $\#_t(s) = |\{x : x \sqsubseteq s \text{ and } x=t\}|$, i.e., the number of occurrences of $t$ in $s$~\cite{spectrum_kernel}. 
The explicit feature map for the spectrum kernel is thus the frequency vector $\phi(s) \in \mathbb{N}^{|\Sigma^*_k|}$ such that $\phi_i(s) = \#_t(s)$ where $t$ is the $i$-th $k$-gram in the explicit enumeration of all $k$-grams. 

We will consider extensions of the spectrum kernel with the polynomial kernel for $p \in \mathbb{N}$: for a constant $c \geq 0$, 
$$
poly(s_1,s_2) = (\phi(s_1)^T\phi(s_2) + c)^p.
$$ 
This accommodates cosine kernel $\cos(s_1,s_2)$ when feature vectors are normalized as $\phi(s)/\|\phi(s)\|$.

\paragraph{Count-Sketch and Tensor-Sketch} Sketching is an algorithmic tool for the summarization of massive datasets such that key properties of the data are preserved. In order to achieve scalability, we will summarize the $k$-gram frequency vector distributions. In particular, we will use Count-Sketch~\cite{count_sketch} that for vectors $u, v \in \mathbb{R}^d$ computes sketches $z(u), z(v) \in \mathbb{R}^b$ such that $z(u)^Tz(v) \approx u^Tv$  and $b < d$ controls the approximation quality. A key property is that Count-Sketch is a linear projection of the data and this will allow us to incrementally generate strings and sketch their $k$-gram distribution. For the polynomial kernel $poly(x, y) = (x^Ty + c)^p$ and $x, y \in \mathbb{R}^d$, the explicit feature map of $x$ and $y$ is their $p$-level tensor product, i.e. the $d^p$-dimensional vector formed by taking the product of all subsets of $p$ coordinates of $x$ or $y$. Hence, computing the explicit feature map and then sketching it using Count-Sketch requires $O(d^p)$ time. Instead, using Tensor-Sketch~\cite{tensorsketch}, we compute a sketch of size $b$ for a $p$-level tensor product in time $O(p(d + b \log b))$.

\section{Main results} \label{sec:main}

In this section we first describe the proposed algorithm, discuss in detail its components, and then present theoretical approximation guarantees for using sketches to approximate graph kernels.

\paragraph{Algorithm}\label{sec:overview} The proposed algorithm is based on the following key ideas: (a) representation of each node $v$'s neighborhood by a string $S_v$ using a tree traversal method, and (b) approximating the $k$-gram frequency vector of string $S_v$ using sketching in a way does not require storing the string $S_v$. The algorithm steps are described in more detail as follows. Given a graph $G$, for each node $v$ we traverse the subtree rooted at $v$ using the neighbor ordering $\tau$ and generate a string. The subtrees represent the graph decomposition of the convolutional kernel. The algorithm allows for flexibility in choosing different alternatives for the subtree traversal. The generated strings are compared by a string kernel. This string kernel is evaluated by computing an explicit feature map for the string at each node.  Scalability is achieved by approximating explicit feature maps using sketching techniques so that the kernel can be approximated within a prescribed approximation error. The sum of the node explicit feature maps is the explicit feature map of the graph $G$. The algorithm is outlined in Algorithm~\ref{alg:general}.

\begin{algorithm}
\DontPrintSemicolon
\KwIn{Graph $G= (V, E, \ell, \tau)$, depth $h$, labeling $\ell: V \rightarrow {\mathcal L}$, base kernel $\kappa$}
\For{$v \in V$}{
	Traverse the subgraph $T_v$ rooted at $v$ up to depth $h$\;
	Collect the node labels $\ell(u): u \in T_v$ in the order specified by $\tau_v$ into a string $S_v$\;
    Sketch the explicit feature map $\phi_\kappa(S_v)$ for the base string kernel $\kappa$ (without storing $S_v$)

}
$\Phi_\kappa(G) \gets \sum_{v\in V} \phi_\kappa(S_v)$\;
\Return{$\Phi_\kappa(G)$}\;
\caption{{\sc ExplicitGraphFeatureMap}}
\label{alg:general}
\end{algorithm}


\paragraph{Tree traversal and string generation} \label{sec:traversal} There are different options for string construction from each node neighborhood. We present a general class of subgraph traversal algorithms that iteratively collect the node strings from the respective neighborhood. 

\begin{defx}
Let $S^h_v$ denote the string collected at node $v$ after $h$ iterations.
A subgraph traversal algorithm is called a {\em composite string generation traversal (CSGT)} if $S^h_v$ is a concatenation of a subset of the strings $s_v^0,\ldots,s_v^h$. Each $s_v^i$ is computed in the $i$-th iteration and is the concatenation of the strings $s_u^{i-1}$ for $u \in N_v$, in the order given by $\tau_v$.
\end{defx}
\vspace*{-2.5mm}
The above definition essentially says that we can iteratively compute the strings collected at a node $v$ from strings collected at $v$ and $v$'s neighbors in previous iterations, similarly to the dynamic programming paradigm. As we formally show later, this implies that we will be able to collect all node strings $S_v^h$ by traversing $O(m)$ edges in each iteration and this is the basis for designing efficient algorithm for computing the explicit feature maps.

Next we present two examples of CSGT algorithms. The first one is the standard iterative breadth-first search algorithm that for each node $v$ collects in $h+1$ lists the labels of all nodes within exactly $i$ hops, for $0 \le i \le h$. The strings $s_v^i$ collect the labels of nodes within exactly $i$ hops from $v$.  After $h$ iterations, we concatenate the resulting strings, see Algorithm~\ref{alg:bfs}. In the toy example in Figure~\ref{fig:example}, the string at the node with label $A$ is generated as $S_v^2 = s_v^0s_v^1s_v^2$ resulting in  $A|BCDG|EF|H|G$  ($s_v^0 = A$, $s_v^1 = BCDG$ and $s_v^2=EFGH$). 

Another approach, similar to the WL labeing algorithm~\cite{weisfeiler_lehman}, is to concatenate the neighbor labels in the order given by $\tau_v$ for each node $v$ into a new string. In the $i$-th iteration we set $\ell(v) = s^i_v$, i.e., $s^i_v$ becomes $v$'s new label. We follow the CSGT pattern by setting $S_v^h = s_v^h$, as evident from Algorithm~\ref{alg:wl}. In our toy example, we have $s^0_v = A$ and $s^1_v = ABCD$ and $s^2_v = ABEFCHDGG$ generated from the neighbor strings $s^1_u, u \in N_v$: $BEF$, $CH$, $DG$ and $G$.

\begin{minipage}[t]{8cm}
  \vspace{0pt}  
  \begin{algorithm}[H]
\DontPrintSemicolon
\KwIn{Graph $G= (V, E, \ell, \tau)$, depth $h$, labeling $\ell: V \rightarrow S$}
\For{$v \in V$}{
	$s_v^0 = \ell(v)$
}
\For{$i = 1$ to $h$}{
\For{$v \in V$}{
	$s_v^i = \$$ \hspace*{5mm} \texttt{//$\$$ is the empty string} \;
\For{$u \in \tau_v(N_v)$}{
	$s_v^i \gets s_v^i$.append$(s_u^{i-1})$
}
}
}
\For{$v \in V$}{
$S^h_v = \$$\;
\For{$i = 0$ to $h$}{
$S^h_v \gets S^h_v$.append$(s_v^i)$
}
}
\caption{{\sc Breadth-first search}}
\label{alg:bfs}
\end{algorithm}
\end{minipage}%
\hspace*{0.4cm}
\begin{minipage}[t]{8cm}
  \vspace{0pt}
  \begin{algorithm}[H]
    \DontPrintSemicolon
\KwIn{Graph $G= (V, E, \ell, \tau)$, depth $h$, labeling $\ell: V \rightarrow S$}
\For{$v \in V$}{
	\For{$i = 1$ to $h$}{
	$s_v^i \gets \ell(v)$	
	}
}
\For{$i = 1$ to $h$}{
\For{$v \in V$}{
\For{$u \in \tau_v(N_v)$}{
	$s_v^i \gets s_v^i$.append$(s_u^{i-1})$
}
}
}
\For{$v \in V$}{
$S_v^h \gets s_v^h$
}
\caption{{ \sc Weisfeiler-Lehman}}
\label{alg:wl}
  \end{algorithm}
\end{minipage}
\vspace*{1cm}

\emph{String kernels and WL kernel smoothing}: \label{sec:string_kernels} After collecting the strings at each node we have to compare them. An obvious choice would be the Dirac kernel which compares two strings for equality. This would yield poor results for graphs of larger degree where most collected strings will be unique, i.e., the diagonal dominance problem where most graphs are similar only to themselves. Instead, we consider extensions of the spectrum kernel~\cite{spectrum_kernel} comparing the $k$-gram distributions between strings, as discussed in Section~\ref{sec:prel}.

Setting $k=1$ is equivalent to collecting the node labels disregarding the neighbor order and comparing the label distribution between all node pairs. In particular, consider the following smoothing algorithm for the WL kernel. In the first iteration we generate node strings from neighbor labels and relabel all nodes such that each string becomes a new label. Then, in the next iteration we again generate strings at each node but instead of comparing them for equality with the Dirac kernel, we compare them with the polynomial or cosine kernels. $\cos(s_1,s_2)^p$ decreases faster with $p$ for dissimilar strings, thus $p$ can be seen as a smoothing parameter. 


\paragraph{Sketching of $k$-gram frequency vectors} The explicit feature maps for the polynomial kernel for $p > 1$ can be of very high dimensions. A solution is to first collect the strings $S_v^h$ at each node, then incrementally generate $k$-grams and feed them into a sketching algorithm that computes compact representation for the explicit feature maps of polynomial kernel. However, for massive graphs with high average degree, or for a large node label alphabet size, we may end up with prohibitively long unique strings at each node. Using the key property of the incremental string generation approach and a sketching algorithm, which is a linear projection of the original data onto a lower-dimensional space, we will show how to sketch the $k$-gram distribution vectors without explicitly generating the strings $S^h_v$. More concretely, we will replace the line \texttt{$s_v^i \gets s_v^i$.append$(s_u^{i-1})$} in Algorithms~\ref{alg:bfs} and~\ref{alg:wl} with a sketching algorithm that will maintain the $k$-gram distribution of each $s_v^i$ as well as  $s_v^i$'s $(k-1)$-prefix and $(k-1)$-suffix. In this way we will only keep track of newly generated $k$-grams and add up the sketches of the $k$-gram distribution of the $s_v^i$ strings computed in previous iterations. 


Before we present the main result, we show two lemmas that state properties of the incremental string generation approach. Observing that in each iteration we concatenate at most $m$ strings, we obtain the following bound on the number of generated $k$-grams. 

\begin{lemmx} \label{lem:number}
The total number of newly created $k$-grams during an iteration of CSGT is $O(mk)$.
\end{lemmx}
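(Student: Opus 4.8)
The plan is to trace through a single iteration of a CSGT algorithm and count, string by string, how many $k$-grams are genuinely new --- that is, $k$-grams that appear in some $s_v^i$ but were not already present (hence already accounted for, via sketching) in the constituent strings $s_u^{i-1}$, $u \in N_v$. First I would recall the structure guaranteed by the CSGT definition: in iteration $i$, for each node $v$ we form $s_v^i$ as the concatenation of the strings $s_u^{i-1}$ over $u \in \tau_v(N_v)$ (possibly with $\ell(v)$ prepended, as in Algorithm~\ref{alg:wl}; this contributes at most a bounded-length prefix and does not change the asymptotics). A $k$-gram of $s_v^i$ is either entirely contained within one of the constituent substrings $s_u^{i-1}$ --- in which case it is not new --- or it straddles a boundary between two consecutive constituents $s_u^{i-1}$ and $s_{u'}^{i-1}$. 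So the only new $k$-grams are the ``boundary-crossing'' ones.

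Next I would bound the number of boundary-crossing $k$-grams at a single concatenation point. A $k$-gram that crosses a given boundary is determined by how many of its characters lie to the left of the boundary: this number ranges over $1, \ldots, k-1$, so there are at most $k-1$ such $k$-grams per internal boundary (and each is read off deterministically from the $(k-1)$-suffix of the left constituent and the $(k-1)$-prefix of the right constituent --- which is exactly why the sketching scheme maintains these prefixes and suffixes). If node $v$ has degree $d_v = |N_v|$, then forming $s_v^i$ involves $d_v - 1$ internal boundaries, contributing at most $(d_v - 1)(k-1) < d_v k$ new $k$-grams. Summing over all $v \in V$ and using $\sum_{v \in V} d_v = 2m$ (or $m$ for directed graphs) gives a total of $O(mk)$ new $k$-grams for the whole iteration. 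A corner case worth a sentence: if a constituent string $s_u^{i-1}$ has length less than $k-1$, a single new $k$-gram may span three or more constituents; but the number of constituents involved is still at most one more than the number of boundaries it crosses, so the count ``at most $k-1$ new $k$-grams per boundary, charged to the left endpoint of the boundary'' still holds, and the total is unchanged.

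I do not anticipate a serious obstacle here --- the result is essentially a bookkeeping argument --- but the one point that requires care is making the charging scheme airtight when constituent strings are short: one must be sure that each new $k$-gram is charged to at least one internal boundary and that no boundary is overcharged beyond $k-1$. The cleanest way is to charge each new $k$-gram to the leftmost internal boundary it crosses; then distinct new $k$-grams crossing the same leftmost boundary differ in their left-overhang length (a value in $\{1,\dots,k-1\}$), so at most $k-1$ are charged to any boundary, and the bound $\sum_v (d_v-1)(k-1) = O(mk)$ follows immediately. Finally, I would remark that this also shows each iteration can be carried out by reading $O(m)$ edges and doing $O(k)$ work per edge, which is the scalability claim the paper relies on downstream.
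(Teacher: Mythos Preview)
Your proposal is correct and follows essentially the same argument as the paper: new $k$-grams arise only at the $|N_v|-1$ concatenation boundaries when forming $s_v^i$, each boundary contributes at most $k-1$ new $k$-grams, and summing $\sum_{v\in V}(k-1)(|N_v|-1)=O(mk)$ gives the bound. Your treatment is in fact more careful than the paper's, which does not explicitly address the short-constituent corner case or spell out the charging scheme.
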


\begin{proof}
By definition, the string $s_v^i$ generated by CSGT at a node $v$ in the $i$-th iteration is the concatenation of strings generated at its neighbor nodes in the $i-1$-th iteration.  Therefore, new $k$-grams can only be created when concatenating two strings. For a node $v$ there are $|N_v|-1$ string concatenations, each of them resulting in at most $k-1$ $k$-grams. Thus, the total number of newly created $k$-grams is at most
$$
\sum_{v \in V} (k-1)(|N_v| - 1) = O(mk).
$$    
\end{proof}

The next lemma shows that in order to compute the $k$-gram distribution vector we do not need to explicitly store each intermediate string $s_v^i$ but only keep track of the substrings that will contribute to new $k$-grams and $s_v^i$'s $k$-gram distribution. This allows us to design efficient algorithms by maintaining sketches for $k$-gram distribution of the $s_v^i$ strings.  

\begin{lemmx} \label{lem:distr}
The  $k$-gram distribution vector of the strings $s^i_v$ at each node $v$ can be updated after an iteration of CSGT from the distribution vectors of the strings $s^{i-1}_v$ and explicitly storing substrings of total length $O(mk)$. 
\end{lemmx}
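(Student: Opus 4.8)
The plan is to exploit that, by definition of a CSGT, in iteration $i$ the string at $v$ is the concatenation $s_v^i = s_{u_1}^{i-1} s_{u_2}^{i-1}\cdots s_{u_d}^{i-1}$, where $(u_1,\dots,u_d)=\tau_v(N_v)$ and $d=|N_v|$, and to split the $k$-grams of $s_v^i$ into \emph{old} ones (those lying entirely inside a single $s_{u_j}^{i-1}$) and \emph{new} ones (those straddling at least one of the $d-1$ boundaries). Matching occurrences position-by-position shows that the multiset of old $k$-grams of $s_v^i$ is exactly the disjoint union of the $k$-gram multisets of $s_{u_1}^{i-1},\dots,s_{u_d}^{i-1}$, so their contribution to the distribution vector is $\sum_{j=1}^d \phi(s_{u_j}^{i-1})$ --- this needs only the distribution vectors from the previous iteration, nothing stored. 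What remains is to produce the new $k$-grams from short substrings only.

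To this end I would carry, for each node $v$ and each level $i$, besides $\phi(s_v^i)$, the length-$(k-1)$ prefix $\pi_v^i$ and length-$(k-1)$ suffix $\sigma_v^i$ of $s_v^i$ (the whole string when $|s_v^i|<k-1$) together with the integer $|s_v^i|$. These are recomputed at level $i$ from the level-$(i-1)$ data: $\pi_v^i$ is built by concatenating $\pi_{u_1}^{i-1},\pi_{u_2}^{i-1},\dots$ and truncating to $k-1$ characters, stopping as soon as $k-1$ are collected; this is correct because a neighbor string shorter than $k-1$ is stored in full, so prefixes of short strings compose, and $\sigma_v^i$ is obtained symmetrically from the right. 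For the new $k$-grams I would assign each new $k$-gram occurrence of $s_v^i$ to the unique neighbor string $s_{u_j}^{i-1}$ containing its \emph{starting} position; such an occurrence must start within the last $k-1$ characters of $s_{u_j}^{i-1}$ (otherwise it would be contained in $s_{u_j}^{i-1}$) and extend over at most $k-1$ characters of $s_{u_{j+1}}^{i-1}s_{u_{j+2}}^{i-1}\cdots$, so in particular $j\le d-1$. Letting $R_j$ be the length-$(k-1)$ prefix of $s_{u_{j+1}}^{i-1}\cdots s_{u_d}^{i-1}$ (computed from the stored prefixes exactly as above, and obtained for all $j$ in a single right-to-left sweep that keeps one window of length $\le k-1$), the new $k$-grams assigned to $u_j$ are precisely the $k$-grams of the window $W_j := \sigma_{u_j}^{i-1}R_j$ that cross the junction between $\sigma_{u_j}^{i-1}$ and $R_j$; here $|W_j|\le 2(k-1)$ and $W_j$ yields at most $k-1$ such $k$-grams. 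Since every new $k$-gram of $s_v^i$ is assigned to exactly one $u_j$ with $j\le d-1$, we obtain the update $\phi(s_v^i)=\sum_{j=1}^d\phi(s_{u_j}^{i-1})+\sum_{j=1}^{d-1}\psi(W_j)$, where $\psi(W_j)$ is the distribution of the junction-crossing $k$-grams of $W_j$, read off directly from $W_j$.

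For the storage bound, the persistent prefixes and suffixes over all nodes have total length $\sum_v 2(k-1)=O(nk)$, and the substrings touched while processing a node $v$ --- the running prefix/suffix buffers, the windows $R_j$, and the windows $W_j$ --- have total length $O(|N_v|\,k)$, so $\sum_v O(|N_v|\,k)=O(mk)$; the new $k$-grams read off sum to $O(mk)$ by Lemma~\ref{lem:number}. Hence one CSGT iteration updates all $k$-gram distribution vectors using only the previous distribution vectors and explicitly stored substrings of total length $O((n+m)k)=O(mk)$, and it restores the invariant ($\phi,\pi,\sigma,|\cdot|$ at level $i$) so the construction iterates. The same argument goes through verbatim when the distribution vectors are replaced by their Count-Sketches, since Count-Sketch is linear and new $k$-grams can be inserted into a sketch one at a time. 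I expect the only delicate point to be the bookkeeping for $k$-grams that span a run of several very short neighbor strings: the ``assign to the string containing the start position'' rule is what rules out double counting across boundaries, and composing the stored prefixes of short strings is what guarantees that $R_j$ actually contains every character such a $k$-gram needs.
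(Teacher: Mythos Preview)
Your proof is correct and follows the same approach as the paper: store the $(k{-}1)$-prefix and $(k{-}1)$-suffix of each $s_u^{i-1}$ so that the new $k$-grams created at concatenation boundaries can be enumerated, and observe that the total length of the stored substrings is $\sum_{v} 2(k-1)|N_v| = O(mk)$. Your write-up is considerably more careful than the paper's terse argument---in particular you spell out how the prefixes and suffixes are themselves updated, and you explicitly handle $k$-grams that span several short neighbor strings via the $R_j$ windows and the ``assign to the block containing the start position'' rule---but the underlying idea is identical.
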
   
\begin{proof}
We need to explicitly store only the substrings that will contribute to new $k$-grams. Consider a string $s_v^i$. We need to concatenate the $|N_v|$ strings $s_u^{i-1}$. Since we need to store the $k-1$-prefix and $k-1$-suffix of each $s_u^{i-1}$, for all $u \in V$, it follows that the total length of the stored state is at most 
$$
\sum_{v \in V} 2(k-1)|N_v| = O(mk).
$$    
\end{proof} 


The following theorem is our main result that accommodates both polynomial and cosine kernels. We define $\cos_k^h(u,v)^p$ to be the cosine similarity to the power $p$ between the $k$-gram distribution vectors collected at nodes $u$ and $v$ after $h$ iterations. 
\begin{thmx}
Let $G_1, \ldots, G_M$ be a collection of $M$ graphs, each having at most $m$ edges and $n$ nodes. Let $K$ be a convolutional graph kernel with base kernel the polynomial or cosine kernel with parameter $p$, and let $\hat{K}$ be $K$'s approximation obtained by using size-$b$ sketches of explicit feature maps.  Consider an arbitrary pair of graphs $G_i$ and $G_j$. Let $T_{<\alpha}$ denote the number of node pairs $v_i \in G_i$, $v_j \in G_j$ such that $\cos_k^h(v_i, v_j)^p < \alpha$ and $R$ be an upper bound on the norm of the $k$-gram distribution vector at each node. 

Then, we can choose a sketch size $b = O(\frac{\log M + \log n}{\alpha^2\varepsilon^2}\log\frac{1}{\delta})$ such that $\hat{K}(G_i, G_j)$ has an additive error of at most $\varepsilon(K(G_i, G_j) + R^{2p} \alpha T_{<\alpha})$ with probability at least $1-\delta$, for $\varepsilon, \delta \in (0,1)$.

A graph sketch can be computed in time $O(mkph + npb\log b)$ and space $O(n b)$.
\end{thmx}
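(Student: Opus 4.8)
\emph{Proof plan.} The plan is to view $\hat K(G_i,G_j)$ as the Count-Sketch/Tensor-Sketch estimator of the exact inner product $K(G_i,G_j)=\Phi_\kappa(G_i)^T\Phi_\kappa(G_j)$ and to control its error node pair by node pair, so that the pairs can be separated into ``similar'' ones, with $\cos_k^h(v_i,v_j)^p\ge\alpha$, and ``dissimilar'' ones, with $\cos_k^h(v_i,v_j)^p<\alpha$, and handled by two different estimates. Since Count-Sketch is a linear map, the vector returned by Algorithm~\ref{alg:general} is $\Phi(G)=\sum_{v\in G} z(\phi_\kappa(S_v))$, where for the polynomial (resp.\ cosine) base kernel $\phi_\kappa(S_v)$ is the $p$-fold tensor power of the $k$-gram frequency vector $\phi(S_v)$ (resp.\ of $\phi(S_v)/\|\phi(S_v)\|$) and $z$ is its Tensor-Sketch. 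Hence $\hat K(G_i,G_j)=\Phi(G_i)^T\Phi(G_j)=\sum_{v_i\in G_i}\sum_{v_j\in G_j} z(\phi_\kappa(S_{v_i}))^T z(\phi_\kappa(S_{v_j}))$, and writing $E_{v_i,v_j}$ for the deviation of each summand from $\phi_\kappa(S_{v_i})^T\phi_\kappa(S_{v_j})=\kappa(S_{v_i},S_{v_j})$, we get $\hat K(G_i,G_j)-K(G_i,G_j)=\sum_{v_i,v_j} E_{v_i,v_j}$.

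Next I would invoke the standard inner-product guarantee of Count-Sketch and its tensor version~\cite{count_sketch,tensorsketch}: with $w=O(1/(\alpha\varepsilon)^2)$ buckets per table and $d$ independent tables (total size $b=wd$), and using that the $p$-fold tensor power has norm $\|\phi_\kappa(S_v)\|=\|\phi(S_v)\|^p$, a fixed node pair satisfies $|E_{v_i,v_j}|\le\alpha\varepsilon\,\|\phi_\kappa(S_{v_i})\|\,\|\phi_\kappa(S_{v_j})\|$ with probability at least $1-2^{-\Omega(d)}$ (Chebyshev per table, Chernoff over the median of the $d$ tables). A union bound over all at most $M^2n^2$ choices of a graph pair and a node pair then forces $d=O(\log M+\log n+\log\frac1\delta)$, which is absorbed into the claimed $b=O\!\left(\frac{\log M+\log n}{\alpha^2\varepsilon^2}\log\frac1\delta\right)$; on the corresponding event the above bound on $|E_{v_i,v_j}|$ holds for all node pairs in all graph pairs simultaneously.

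The crux is then the case split applied to the single pair $(G_i,G_j)$. For each of the $T_{<\alpha}$ dissimilar pairs I use only $\|\phi_\kappa(S_{v_i})\|\,\|\phi_\kappa(S_{v_j})\|\le R^{p}\cdot R^{p}=R^{2p}$, so together they contribute at most $\alpha\varepsilon R^{2p}T_{<\alpha}$. For every remaining pair the defining inequality $\cos_k^h(v_i,v_j)^p\ge\alpha$ lets me replace the crude norm bound by the kernel value itself: $\|\phi_\kappa(S_{v_i})\|\,\|\phi_\kappa(S_{v_j})\|=\kappa(S_{v_i},S_{v_j})/\cos_k^h(v_i,v_j)^p\le\kappa(S_{v_i},S_{v_j})/\alpha$ (taking the polynomial offset $c=0$; a positive $c$ only rescales $R$). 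Since every term of $K(G_i,G_j)$ is nonnegative, summing over the similar pairs is at most $\alpha\varepsilon\cdot\frac1\alpha K(G_i,G_j)=\varepsilon K(G_i,G_j)$. Adding the two contributions gives $|\hat K(G_i,G_j)-K(G_i,G_j)|\le\varepsilon\bigl(K(G_i,G_j)+R^{2p}\alpha T_{<\alpha}\bigr)$, which is the claim; for the cosine kernel one may take $R=1$. I expect this refined per-pair accounting --- and in particular the observation that on similar pairs the norm product is governed by the kernel value rather than by $R^{2p}$ --- to be the main obstacle: a naive aggregate application of the sketch bound would only yield the far weaker $O(\varepsilon\,n^2R^{2p})$.

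For the complexity statement I would combine the two structural lemmas with the cost of Tensor-Sketch. Running the CSGT of Algorithm~\ref{alg:bfs} or~\ref{alg:wl} for $h$ iterations creates $O(mkh)$ new $k$-grams in total (Lemma~\ref{lem:number}), and each node only needs to retain prefix/suffix state of total length $O(mk)$ (Lemma~\ref{lem:distr}); feeding a new $k$-gram into the $p$ component Count-Sketch tables of its node takes $O(p)$ time, for $O(mkph)$ in all, with the (linear) component sketches combined along the traversed edges within the same budget. Deferring the FFT-based conversion of the $p$ component tables into the final Tensor-Sketch to a single pass after the last iteration costs $O(pb\log b)$ per node, hence $O(npb\log b)$; summing the $n$ node sketches yields $\Phi_\kappa(G)$. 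This gives total time $O(mkph+npb\log b)$ and working space $O(nb)$ for the $n$ size-$b$ node sketches (plus the lower-order $O(mk)$ bookkeeping of Lemma~\ref{lem:distr}).
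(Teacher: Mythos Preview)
Your proposal is correct and follows essentially the same route as the paper: decompose $\hat K-K$ into per-node-pair errors, invoke the Tensor-Sketch variance bound with Chebyshev and the median trick, union-bound over all node pairs, and then split pairs according to whether $\cos_k^h(v_i,v_j)^p$ exceeds $\alpha$ to obtain the multiplicative part $\varepsilon K(G_i,G_j)$ and the additive part $\varepsilon\alpha R^{2p}T_{<\alpha}$. Your complexity argument likewise matches the paper's, combining Lemmas~\ref{lem:number} and~\ref{lem:distr} with the $O(pb\log b)$ per-node FFT step of Tensor-Sketch. The only cosmetic differences are that the paper spells out separately the $c>0$ reduction and the $p=1$ cosine normalization via the Count-Sketch norm estimate, and uses $Mn^2$ rather than $M^2n^2$ in the union bound; neither changes the argument or the asymptotics.
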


\begin{proof}
First we note that for non-homogeneous polynomial kernel $(x^Ty + c)^p$ and $c > 0$, we can add an extra dimension with value $\sqrt{c}$ to the $k$-gram frequency vector of each string. Therefore in the following w.l.o.g. we assume $c=0$.
 
We first show how to incrementally maintain a sketch of the $k$-gram frequency vector of each $s_v^i$. In the first iteration, we generate a string $s_v^1$ at each node $v$ from the labels $\ell(u)$ for $u \in N_v$. We then generate the $k$-grams and feed them into $sketch_v$ and keep the $k-1$-prefix and $k-1$-suffix of each $s_v^1$. By Lemma~\ref{lem:distr}, we can compute the $k$-gram frequency vector of $s_v^2$ from the prefixes and suffixes of $s_u^1$, for $u \in N_v$ and the $k$-gram frequency vector of $s_u^1$. 

A key property of Count-Sketch is that it is a linear transformation, i.e. it holds $CS(x + y) = CS(x) + CS(y)$ for $x, y \in \mathbb{R}^d$. Thus, we have that 
$$
CS(s_v^i) = \sum_{u \in N_v} CS(s_u^{i-1}) + CS(K(\tau_v(N_v))) 
$$
where $K(\tau_v(N_v))$ denotes the newly created $k$-grams from the concatenation of the strings $s_u^{i-1}$. 

By Lemmas~\ref{lem:number} and \ref{lem:distr}, we can thus compute a single Count-Sketch that summarizes the $k$-gram frequency vector of $S_v^h$ for all $v \in V$  in time $O(mkh)$ and space $O(n b)$ for sketch size $b$.

For the cosine kernel with parameter $p = 1$, we extend the above to summarizing the normalized $k$-gram frequency vectors as follows. As discussed, Count-Sketch maintains $b$ bins. After processing all $k$-grams of a string $s$, it holds $cnt_j = \sum_{t \in \Sigma^*_k: h(t) = j} \#_t(s)$, where $\#_t(s)$ is the number of occurrences of string $t$ in $s$. Instead, we want to sketch the values $\#_t(s)/w$  where $w$ is the 2-norm of the $k$-gram frequency vector of $S_v^i$. From each Count-sketch $CS(s_v^i)$ we can compute also an $(1\pm \varepsilon)$-approximation $\tilde{w}$ of the norm of $k$-gram frequency vector~\cite{count_sketch}.  Using that $(1+\epsilon)/(1-\epsilon) \geq 1+2\epsilon$ and $(1-\epsilon)/(1+\epsilon)\geq 1-2\epsilon$ for $\varepsilon \leq 1/2$, we can scale $\varepsilon$ in order to obtain the desired approximation guarantee.

Now consider the polynomial and cosine kernels with parameter $p>1$. Let $TS(x)$ denote the Tensor-Sketch of vector $x$. By the main result from~\cite{tensorsketch}, for a sketch size $b = 1/(\alpha^2 \varepsilon^2)$, $TS(x)TS(y)$ is an approximation of $(x^Ty)^p$ such that the variance of the additive error is $((x^Ty)^{2p} + (\|x\|\|y\|)^{2p})/b$. For $\alpha \le \cos(x, y)^p$ we thus have  
$$
\frac{(x^Ty)^{2p} + (\|x\|\|y\|)^{2p}}{b} \le \varepsilon^2 \alpha^2 (x^T y)^{2p} + \varepsilon^2 \alpha^2 (\|x\|\|y\|)^{2p} \le 2 \varepsilon^2(x^T y)^{2p}.
$$
A standard application of Chebyshev's inequality yields an $(1\pm \varepsilon)$-multiplicative approximation of $(x^Ty)^p$ with probability larger than 1/2. On the other hand, for $\alpha > \cos(x, y)^p$ we bound the additive error to $2\alpha\varepsilon(\|x\|\|y\|)^p = O(\alpha\varepsilon R^{2p})$. The bounds hold with probability $\delta$ by taking the median of $\log(1/\delta)$ independent estimators, and by the union bound $\delta$ can be scaled to $\delta/(Mn^2)$ such that the bounds hold for all node pairs for all graphs.  The same reasoning applies to the cosine kernel where the norm of the vectors is bounded by 1. 

The Tensor-Sketch algorithm keeps $p$ Count-sketches per node and we need to feed the $k$-gram distribution vector at each node into each sketch.  After $h$ iterations, the $p$ sketches at each node are converted to a single sketch using the Fast Fourier transform in time $O(pb \log b)$. This shows the claimed time and space bounds.
\end{proof}

Note that for the cosine kernel it holds $R = 1$. Assuming that $p$, $k$ and $h$ are small constants, the running time per graph is linear and the space complexity is sublinear in the number of edges. The approximation error bounds are for the general worst case and can be better for skewed distributions.

\paragraph{Graph streams} We can extend the above algorithms to work in the {\em semi-streaming graph model} \cite{semistreaming} where we can afford $O(n \text{ polylog}(n))$ space. Essentially, we can store a compact sketch per each node but we cannot afford to store all edges. We sketch the $k$-gram distribution vectors at each node $v$ in $h$ passes.  In the $i$-pass, we sketch the distribution of $s_v^i$ from the sketches $s_u^{i-1}$ for $u \in N_v$ and the newly computed $k$-grams. We obtain following result:

\begin{thmx}
Let $E$ be stream of labeled edges arriving in arbitrary order, each edge $e_i$ belonging to one of $M$ graphs over $N$ different nodes. We can compute a sketch of each graph $G_i$ in $h$ passes over the edges by storing a sketch of size $b$ per node using $O(N b)$ space in time $O(|E|hkp + b \log b)$.
\end{thmx}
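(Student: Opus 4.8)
The plan is to replay the proof of the preceding theorem, trading the ``hold the whole graph and iterate $h$ times'' schedule for an ``$h$ passes, $O(Nb)$ persistent state'' schedule, and to charge resources pass by pass. With $CS$ and $TS$ the Count-Sketch and Tensor-Sketch operators from that proof, the persistent state is: for each of the $N$ nodes $u$, one size-$b$ sketch --- for kernel parameter $p$ this is $TS(s_u^i)$, a bundle of $p$ Count-Sketches of $b$ bins, which I count as $O(b)$ words since $p$ is a constant --- together with the $(k-1)$-prefix $P_u$ and $(k-1)$-suffix $Q_u$ of the current string $s_u^i$; plus one size-$b$ accumulator per graph for $\Phi$. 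The invariant kept across passes is that after pass $i$ node $u$ holds $TS(s_u^i)$, $P_u$, $Q_u$, where $s_u^i$ is the string CSGT assigns after $i$ iterations. Since $p,k$ are constant and there are at most $N$ graphs, this is $O(Nb)$ space in total.

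For pass $i$: by the CSGT definition $s_v^i$ is the concatenation, in $\tau_v$-order, of the strings $s_u^{i-1}$ for $u \in N_v$, so by Lemma~\ref{lem:distr} its $k$-gram frequency vector equals $\sum_{u\in N_v}\#(s_u^{i-1})$ plus the $k$-grams straddling the joins of consecutive pieces, and the latter are determined entirely by the stored $P_u,Q_u$ of $v$'s neighbours. Using that $CS$ is linear --- hence $TS$ is additive coordinatewise --- I would, on reading each edge $(v,u)$, add the stored $TS(s_u^{i-1})$ into $v$'s accumulator, feed the $O(k)$ join $k$-grams contributed at $v$'s slot for $u$ into $v$'s $p$ Count-Sketches, and refresh $P_v,Q_v$. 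By Lemma~\ref{lem:number} a pass creates $O(|E|k)$ join $k$-grams, each read in $O(k)$ time and hashed into $p$ sketches in $O(p)$ time, so a pass costs $O(|E|kp)$ and $h$ passes cost $O(|E|hkp)$. After pass $h$, each node folds its $p$ Count-Sketches into one size-$b$ sketch by FFT in $O(pb\log b)$, exactly as in the main theorem, and adds it to its graph's accumulator; combined with the per-pass cost this gives the stated $O(|E|hkp + b\log b)$ time, and the approximation guarantees transfer verbatim from the previous theorem because the object produced is, coordinate for coordinate, the same linear sketch.

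The main obstacle is emitting the join $k$-grams \emph{online} while edges arrive out of order: the grams straddling $s_{u_j}^{i-1}s_{u_{j+1}}^{i-1}$ need both endpoints' prefix/suffix \emph{and} the knowledge that $u_j,u_{j+1}$ are $\tau_v$-consecutive, so a naive pass over scrambled edges could accumulate $\Omega(|N_v|)$ dangling half-joins at a node and break the $O(Nb)$ bound. I would handle this by presenting the stream as an adjacency-list stream --- each node's incident edges arriving consecutively, tagged with their $\tau_v$-rank and in rank order, the standard semi-streaming presentation for labelled graphs --- so that each join $k$-gram is completed as soon as the next-rank edge is read, needing only the previous neighbour's $(k-1)$-suffix as transient $O(k)$ state. (For a genuinely arbitrary edge order one sorts each node's block on the fly, a transient buffer bounded by the maximum degree, or spends one extra bookkeeping pass; and for $k=1$ there are no join $k$-grams, the recurrence is pure addition, so the claim is immediate.) The remaining points --- the corner case where a piece has fewer than $k-1$ characters so a join gram chains across several pieces, already handled inside Lemma~\ref{lem:distr}, and the reduction of non-homogeneous polynomial kernels by one extra coordinate --- are routine and identical to the non-streaming argument.
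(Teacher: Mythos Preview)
Your approach is essentially the paper's: the paper provides no formal proof for this theorem, only the one-sentence sketch in the paragraph preceding it, which says exactly that in pass $i$ one computes the sketch of $s_v^i$ from the stored sketches $s_u^{i-1}$ of the neighbours together with the newly created $k$-grams. You fill this in faithfully and in fact go further than the paper, since it does not mention the out-of-order-edge obstacle for emitting join $k$-grams that you identify and resolve via the adjacency-list stream presentation.
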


The above result implies that we can sketch real-time graph streams in a single pass over the data, i.e. $h=1$. In particular, for constants $k$ and $p$ we can compute explicit feature maps of dimension $b$ for the convolutional kernel for real-time streams for the polynomial and cosine kernels for 1-hop neighborhood and parameter $p$ in time $O(|E| + Nb \log b)$ using $O(Nb)$ space.

\section{Experiments} \label{sec:experiments}

In this section we present our evaluation of the classification accuracy and computation speed of our algorithm and comparison with other kernel-based algorithms using a set of real-world graph datasets. We first present evaluation for general graphs without ordering of node neighborhoods, which demonstrate that our algorithm achieves comparable and in some cases better classification accuracy than the state of the art kernel-based approaches. We then present evaluation for graphs with ordered neighborhoods that demonstrates that accounting for neighborhood ordering can lead to more accurate classification as well as the scalability of our algorithm.

All algorithms were implemented in Python 3 and experiments performed on a Windows 10 laptop with an Intel i7 2.9 GHz CPU and 16 GB main memory. For the TensorSketch implementation, we used random numbers from the Marsaglia Random Number CDROM~\cite{marsaglia}. We used Python's scikit-learn implementation~\cite{scikit_learn} of the LIBLINEAR algorithm for linear support vector classification~\cite{liblinear}.

For comparison with other kernel-based methods, we implemented the explicit map versions of the Weisfelier-Lehman kernel (WL)~\cite{weisfeiler_lehman}, the shortest path kernel (SP)~\cite{sp_kernel} and the $k$-walk kernel (KW)~\cite{expl_vs_impl}. 

\paragraph{General graphs} \label{sec:exp_gen}

We evaluated the algorithms on widely-used benchmark datasets from various domains~\cite{benchmarks}. 
MUTAG~\cite{mutag}, ENZYMES~\cite{enzymes}, PTC~\cite{ptc}, Proteins~\cite{proteins} and NCI1~\cite{nci1} represent molecular structures, and MSRC~\cite{msrc} represents semantic image processing graphs.  Similar to previous works~\cite{graph_cnn,deep_graph_kernels}, we choose the optimal number of hops $h \in \{1,2\}$ for the WL kernel and $k \in \{5, 6\}$ for the $k$-walk kernel. We performed 10-fold cross-validation using 9 folds for training and 1 fold for testing. The optimal regularization parameter $C$ for each dataset was selected from $\{0.1, 1, 10\}$. We ran the algorithms on 30 permutations of the input graphs and report the average accuracy and the average standard deviation. We set the parameter subtree depth parameter $h$ to 2 and  used the original graph labels, and in the second setting we obtained new labels using one iteration of WL. If the explicit feature maps for the cosine and polynomial kernel have dimensionality more than 5,000, we sketched the maps using TensorSketch with table size of 5,000. 

The results are presented in Table~\ref{tab:gen_graphs}. In brackets we give the parameters for which we obtain the optimal value: the kernel, cosine or polynomial with or without relabeling and the power $p \in \{1,2,3,4\}$ (e.g. poly-rlb-1 denotes polynomial kernel with relabeling and $p=1$). We see that among the four algorithms, \name\ achieves the best or second best results. We would like to note that the methods are likely to admit further improvements by learning data-specific string generation algorithms but such considerations are beyond the scope of the paper.  

\renewcommand{\arraystretch}{1.8}
\begin{table}[h!]
 \hspace*{1.5cm}
\begin{tabular}{| >{\centering\arraybackslash} m{1.8cm}
||>{\centering\arraybackslash}m{1.8cm}
|>{\centering\arraybackslash}m{1.8cm}
|>{\centering\arraybackslash}m{1.8cm}
||>{\centering\arraybackslash}m{3cm}|}\hline
\textbf{Dataset} & 
\textbf{KW} & 
\textbf{SP} & 
\textbf{WL} &
\textbf{KONG} \\\hline
Mutag & 83.7 $\pm$ 1.2 & 84.7 $\pm$ 1.3 &  84.9 $\pm$ 2.1 &   87.8 $\pm$ 0.7\  (poly-rlb-1)\\
      \hline 
      Enzymes & 34.8 $\pm$ 0.7 & 39.6 $\pm$ 0.8 &  52.9 $\pm$ 1.1  & 50.1 $\pm$ 1.1\  (cosine-rlb-2) \\
      \hline
      PTC & 57.7 $\pm$ 1.1  & 59.1 $\pm$ 1.3 & 62.4 $\pm$ 1.2 &  63.7 $\pm$ 0.8\  (cosine-2) \\
      \hline  
      Proteins & 70.9 $\pm$ 0.4  & 72.7 $\pm$ 0.5 & 71.4 $\pm$ 0.7 & 73.0 $\pm$ 0.6\  (cosine-rlb-1) \\
      \hline  
      NCI1 & 74.1 $\pm$ 0.3  & 73.3 $\pm$ 0.3 & 81.4 $\pm$ 0.3 & 76.4 $\pm$ 0.3\  (cosine-rlb-1)\\
      \hline 
      MSRC & 92.9 $\pm$ 0.8 & 91.2 $\pm$ 0.9 & 91.0 $\pm$ 0.7 & 95.2 $\pm$ 1.3\  (poly-1)\\
      \hline  
      BZR & 81.9 $\pm$ 0.6  & 81.4 $\pm$ 1.2 &  85.9 $\pm$ 0.9  &  85.1 $\pm$ 1.1 (poly-rlb-2) \\
      \hline  
      COX2 &  78.4 $\pm$ 1.0 & 79.6 $\pm$ 1.1 &  80.7 $\pm$ 0.8 &   81.8 $\pm$ 2.1 (poly-rlb-1)\\
      \hline 
      DHFR & 79.1 $\pm$ 1.0 & 79.2 $\pm$ 0.7 &  81.4 $\pm$ 0.6 &  80.1 $\pm$ 0.5 (poly-rlb-3)\\
      \hline  
    \end{tabular}
  \vspace{2mm}
   \caption{Classification accuracies for general labeled graphs (the 1-gram case).} \label{tab:gen_graphs}
\end{table}

\vspace*{-5mm}
\paragraph{Graphs with ordered neigborhoods} \label{sec:exp_ord} We performed experiments on three datasets of graphs with ordered neighborhoods (defined by creation time of edges). The first dataset was presented in~\cite{manzoor_et_al} and consists of 600 web browsing graphs from six different classes over 89.77M edges and 5.04M nodes. We generated the second graph dataset from the popular MovieLens dataset~\cite{movielens} as follows. We created a bipartite graph with nodes corresponding to users and movies and edges connecting a user to a movie if the the user has rated the movie. The users are labeled into four categories according to age and movies are labeled with a genre, for a total of 19 genres. We considered only movies with a single genre. For each user we created a subgraph from its 2-hop neighborhood and set its class to be the user's gender. We generated 1,700 graphs for each gender. The total number of edges is about 99.09M for 14.3M nodes. The third graph dataset was created from the Dunnhumby's retailer dataset~\cite{dunnhumby}. Similarly to the MovieLens dataset we created a bipartite graph for customer and products where edges represent purchases. Users are labeled in four categories according to their affluence, and products belong to one of nine categories. Transactions are ordered by timestamps and products in the same transaction are ordered in alphabetical order. The total number of graphs is 1,565, over 257K edges and 244K nodes. There are 7 classes corresponding to the user's life stage. The classes have unbalanced distribution, and we optimized the classifier to distinguish between a class with frequency 0.0945\% and all other classes. The optimal $C$-value for SVM optimization was selected  from $10^i$ for $i \in \{-1,0,1,2,3,4\}$.

The average classification accuracies over 1,000 runs of different methods for different training-test size splits are shown in Figure~\ref{fig:fr_recall}. We exclude the SP kernel from the graph because either the running time was infeasible or the results were much worse compared to the other methods. For all datasets, for the $k$-walk kernel we obtained best results for $k=1$, corresponding to collecting the labels of the endpoints of edges.  We set $h=1$ for both WL and \name. We obtained best results for the cosine kernel with $p=1$. The methods compared are those for 2 grams with ordered neighborhoods and shuffled neighborhoods, thus removing the information about order of edges. We also compare with using only 1 grams. Overall, we observe that accounting for the information about the order of neighborhoods can improve classification accuracy for a significant margin. We provide further results in Table~\ref{tab:ord} for training set sizes 80\% showing also dimension of the explicit feature map $D$, computation time (including explicit feature map computation time and SVM training time), and accuracies and AUC metrics. We observe that explicit feature maps can be of large dimension, which can result in large computation time; our method controls this by using $k$-grams.

\begin{figure}[t]
\begin{center}
\includegraphics[scale=0.36]{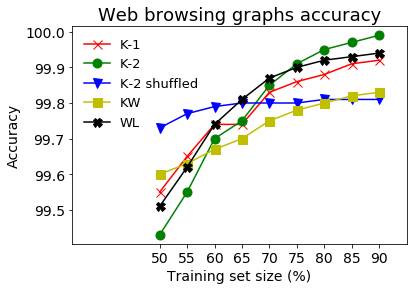}
\includegraphics[scale=0.36]{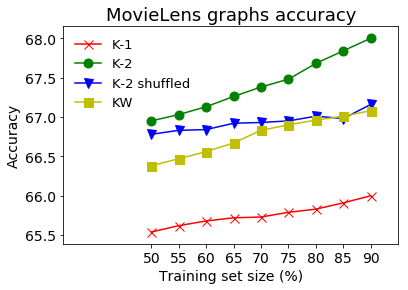}
\includegraphics[scale=0.36]{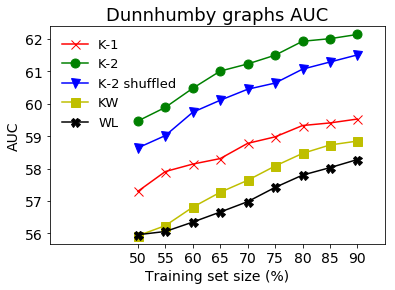}
\end{center}
\vspace{-5mm}
\caption{Comparison of classification accuracy for graphs with ordered neighborhoods.}\label{fig:fr_recall}
\end{figure}
\vspace*{-2mm}


\begin{table}[t]
\hspace*{-1.5cm}
\begin{tabular}{ |  >{\centering\arraybackslash}m{1.4cm} ||  >{\centering\arraybackslash}m{1.5cm}|  >{\centering\arraybackslash}m{1.5cm} |  >{\centering\arraybackslash}m{1.5cm}  ||  >{\centering\arraybackslash}m{1.5cm} |  >{\centering\arraybackslash}m{1.5cm} |  >{\centering\arraybackslash}m{1.5cm} ||  >{\centering\arraybackslash}m{1.5cm} |  >{\centering\arraybackslash}m{1.5cm}|  >{\centering\arraybackslash}m{1.5cm} |  >{\centering\arraybackslash}m{1.5cm} |} 

\multicolumn{1}{c}{} & \multicolumn{3}{c}{\sffamily{Web browsing}} &  \multicolumn{3}{c}{\sffamily{MovieLens}} &  \multicolumn{3}{c}{\sffamily{Dunnhumby}}\\
\hline
\textbf{Method} & \textbf{D} & \textbf{Time} & \textbf{Accuracy} \textbf{AUC} & \textbf{D} & \textbf{Time} & \textbf{Accuracy} \textbf{AUC} & \textbf{D} & \textbf{Time}  & \textbf{Accuracy} \textbf{AUC}\\ 
\hline
\hline
SP & $-$ & $> 24$ hrs  & \hspace*{1.8mm}$-$ \newline $-$ & $-$ & $> 24$ hrs & \hspace*{1.8mm}$-$ \newline $-$ & 228 & \hspace*{1.7mm} 144'' \newline  74'' & 90.61 50.1 \\ 
\hline
KW &                   82 & \hspace*{1.7mm} 665'' \newline 116'' & 99.80 99.94 & 136 & \hspace*{0.8mm} 120'' \newline  420'' & 66.98 73.65 & 56 & \hspace*{1.7mm} 0.7'' \newline  134'' & 90.57 58.47\\ 
\hline
WL &                20,359 & \hspace*{1.7mm}48'' \newline 576'' & 99.92 99.99 & $> 2 $M & \ 492'' \newline $-$ & \hspace*{1.8mm}$-$ \newline $-$ & 2,491 & \hspace*{1.7mm} 22'' \newline  230'' & 90.52 57.80\\
\hline
K-1 & 34 &  \hspace*{1.7mm}206''  \newline 79'' \ & 99.88  99.97 & 21 & \hspace*{0.8mm} 509'' \newline  197'' &  65.83 71.00 &  13  &  \hspace*{1.7mm}42'' \newline  25'' & 90.53 59.33\\
\hline 
K-2  shuffled & 264 & 220'' \ 255'' & 99.81 99.94 & 326 & \hspace*{0.8mm} 592'' \newline  497'' & 67.01 73.31 &  85 & \hspace*{1.7mm} 48'' \newline  131'' & \ 90.57 \ 61.07\\
\hline 
K-2 &            203 & 217'' \ 249'' & 99.95 99.99 & 326 & \hspace*{0.8mm} 589'' \newline  613'' & 67.68 73.20 & 82 & \hspace*{1.7mm} 46'' \newline  133'' & 90.56 61.94 \\
\hline 
\end{tabular}
\vspace{3mm}
\caption{Comparison of the accuracy and speed of different methods for graphs with ordered neighborhoods; we use the notation K-$k$ to denote \name\ using $k$ grams; time shows explicit map computation time and SVM classification time.}
\label{tab:ord}
\end{table}

\paragraph{Sketching}
We obtained best results for $p=1$. However, we also sketched the explicit feature maps for the polynomial and cosine kernels for $p=2$. (Note that the running time for \name \ in Table 2 in the main body of the paper also include the time for sketching.) We present classification accuracy results for sketch sizes 100, 250, 500, 1000, 2500, 5000 and 10000 in Figure~\ref{fig:cs}. As evident from the values, the values are close to the case $p=1$ and also for quite small sketch sizes we obtain good accuracy. This indicates that sketching captures essential characteristics of the 2-gram frequency distribution also for small sketch sizes and can indeed yield compact feature maps.   

\begin{figure}[h!]
\begin{center}
\includegraphics[scale=0.36]{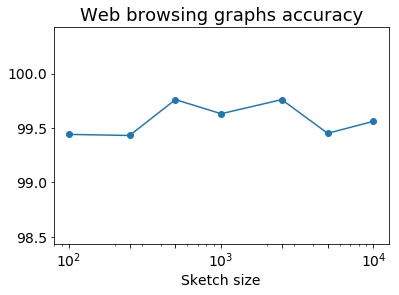}
\includegraphics[scale=0.36]{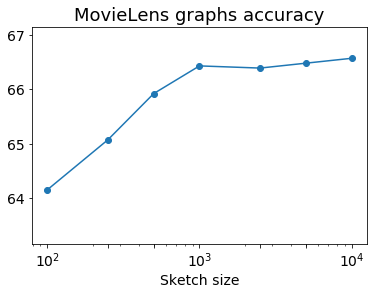}
\includegraphics[scale=0.36]{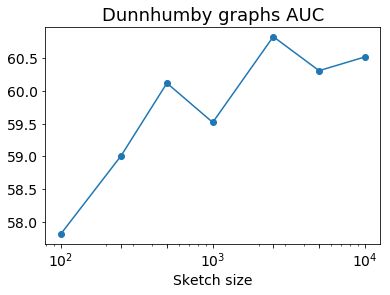}
\end{center}
\vspace{-5mm}
\caption{Comparison of classification accuracy for graphs with ordered neighborhoods.}\label{fig:cs}
\end{figure}

\section{Conclusions} \label{sec:conclusion}

We presented an efficient algorithmic framework  \name\ for learning graph kernels for graphs with ordered neighborhoods. We demonstrated the applicability of the approach and obtained performance benefits for graph classification tasks over other kernel-based approaches.  

There are several directions for future research. An interesting research question is to explore how much graph classification can be improved by using domain specific neighbor orderings. Another direction is to obtain efficient algorithms that can generate explicit graph feature maps but compare the node strings with more complex string base kernels, such as  mismatch or string alignment kernels. 

\bibliographystyle{plain}



\begin{thebibliography}{1}
\bibitem{dunnhumby}
Dunnhumby dataset. https://www.dunnhumby.com/sourcefiles.

\bibitem{marsaglia}
 Marsaglia random number cd-rom. https://web.archive.org/web/20160119150146/http://stat.
fsu.edu/pub/diehard/cdrom/.
\bibitem{movielens}
Movielens dataset. https://grouplens.org/datasets/movielens/.
\bibitem{sp_kernel}
Karsten M. Borgwardt and Hans-Peter Kriegel. Shortest-path kernels on graphs. 
In {\em Proceedings of the
5th IEEE International Conference on Data Mining (ICDM 2005)}, pages 74–81, 2005.

\bibitem{proteins} 
Karsten M. Borgwardt, Cheng Soon Ong, Stefan Schönauer, S. V. N. Vishwanathan, Alexander J.
Smola, and Hans-Peter Kriegel. Protein function prediction via graph kernels. In {\em Proceedings Thirteenth
International Conference on Intelligent Systems for Molecular Biology 2005}, pages 47--56, 2005.


\bibitem{count_sketch} 
Moses Charikar, Kevin C. Chen, and Martin Farach-Colton. Finding frequent items in data streams.
{\em Theor. Comput. Sci.}, 312(1):3--15, 2004.

\bibitem{mutag} A. K. Debnath, R. L. Lopez de Compadre, G. Debnath, A. J. Shusterman, , and C. Hansch. Structure-
activity relationship of mutagenic aromatic and heteroaromatic nitro compounds. correlation with
molecular orbital energies and hydrophobicity. {\em J. Med. Chem.}, 34:786—797, 1991.

\bibitem{liblinear} Rong-En Fan, Kai-Wei Chang, Cho-Jui Hsieh, Xiang-Rui Wang, and Chih-Jen Lin. LIBLINEAR: A
library for large linear classification. 
{\em Journal of Machine Learning Research}, 9:1871--1874, 2008.

\bibitem{semistreaming}
Joan Feigenbaum, Sampath Kannan, Andrew McGregor, Siddharth Suri, and Jian Zhang. On graph
problems in a semi-streaming model. 
{\em Theor. Comput. Sci.}, 348(2--3):207--216, 2005.


\bibitem{fortunato10}
Santo Fortunato. Community detection in graphs. {\em Physics reports}, 486(3-5):75--174, 2010.

\bibitem{gaertner_et_al}
Thomas Gärtner, Peter A. Flach, and Stefan Wrobel. On graph kernels: Hardness results and efficient
alternatives. In {\em 16th Annual Conference on Computational Learning Theory, COLT 2003}, pages 129--143,
2003.

\bibitem{node2vec}
Aditya Grover and Jure Leskovec. node2vec: Scalable feature learning for networks. In {\em Proceedings of
the 22nd ACM SIGKDD International Conference on Knowledge Discovery and Data Mining}, pages
855--864, 2016.

\bibitem{convolutionkernels}
 David Haussler. Convolution kernels on discrete structures, 1999.

\bibitem{ptc} C. Helma, R. D. King, S. Kramer, and A. Srinivasan. The predictive toxicology challenge 2000–2001.
{\em Bioinformatics}, 17:107--108, 2001.


\bibitem{linear_svm} Thorsten Joachims. Training linear svms in linear time. In {\em Proceedings of the Twelfth ACM SIGKDD
International Conference on Knowledge Discovery and Data Mining, 2006}, pages 217--226, 2006.


\bibitem{benchmarks} Kristian Kersting, Nils M. Kriege, Christopher Morris, Petra Mutzel, and Marion Neumann. 
Benchmark data sets for graph kernels, 2016.


\bibitem{expl_vs_impl} 
Nils Kriege, Marion Neumann, Kristian Kersting, and Petra Mutzel. Explicit versus implicit graph
feature maps: A computational phase transition for walk kernels. 
In  {\em 2014 IEEE International Conference
on Data Mining, ICDM 2014}, pages 881--886, 2014.


\bibitem{study} 
Nils M. Kriege, Marion Neumann, Christopher Morris, Kristian Kersting, and Petra Mutzel. A unifying
view of explicit and implicit feature maps for structured data: Systematic studies of graph kernels.
{\em CoRR, abs/1703.00676}, 2017.

\bibitem{fastfood}
Quoc V. Le, Tamás Sarlós, and Alexander J. Smola. Fastfood - computing hilbert space expansions in
loglinear time. In {\em Proceedings of the 30th International Conference on Machine Learning, ICML 2013},
pages 244--252, 2013.

\bibitem{spectrum_kernel} C. Leslie, E. Eskin, and W. S. Noble. The spectrum kernel: A string kernel for SVM protein classification.
In {\em Proceedings of the Pacific Symposium on Biocomputing}, volume 7, pages 566--575, 2002.



\bibitem{manzoor_et_al} 
Emaad A. Manzoor, Sadegh M. Milajerdi, and Leman Akoglu. Fast memory-efficient anomaly detection
in streaming heterogeneous graphs. In {\em Proceedings of the 22nd ACM SIGKDD International Conference
on Knowledge Discovery and Data Mining, 2016}, pages 1035--1044, 2016.

\bibitem{msrc}
Marion Neumann, Roman Garnett, Christian Bauckhage, and Kristian Kersting. Propagation kernels:
efficient graph kernels from propagated information. {\em Machine Learning}, 102(2):209--245, 2016.


\bibitem{graph_cnn}
Mathias Niepert, Mohamed Ahmed, and Konstantin Kutzkov. Learning convolutional neural networks
for graphs. In {\em Proceedings of the 33nd International Conference on Machine Learning, ICML 2016},
pages 2014--2023, 2016.


\bibitem{scikit_learn} F. Pedregosa, G. Varoquaux, A. Gramfort, V. Michel, B. Thirion, O. Grisel, M. Blondel, P. Prettenhofer,
R. Weiss, V. Dubourg, J. Vanderplas, A. Passos, D. Cournapeau, M. Brucher, M. Perrot, and E. Duchesnay.
Scikit-learn: Machine learning in Python. 
{\em Journal of Machine Learning Research}, 12:2825--2830, 2011.



\bibitem{deepwalk}
Bryan Perozzi, Rami Al-Rfou, and Steven Skiena. Deepwalk: online learning of social representations.
In {\em The 20th ACM SIGKDD International Conference on Knowledge Discovery and Data Mining, KDD
2014}, pages 701--710, 2014.


\bibitem{tensorsketch}
Ninh Pham and Rasmus Pagh. Fast and scalable polynomial kernels via explicit feature maps. In {\em The
19th ACM SIGKDD International Conference on Knowledge Discovery and Data Mining, KDD 2013},
pages 239--247, 2013.

\bibitem{RahimiR07}
Ali Rahimi and Benjamin Recht. Random features for large-scale kernel machines. In {\em Advances in
Neural Information Processing Systems 20, Proceedings of the Twenty-First Annual Conference on Neural
Information Processing Systems}, pages 1177--1184, 2007.


\bibitem{Rual05}
Jean-François Rual, Kavitha Venkatesan, Tong Hao, Tomoko Hirozane-Kishikawa, Amélie Dricot, Ning
Li, Gabriel F. Berriz, Francis D. Gibbons, Matija Dreze, Nono Ayivi-Guedehoussou, Niels Klitgord,
Christophe Simon, Mike Boxem, Stuart Milstein, Jennifer Rosenberg, Debra S. Goldberg, Lan V. Zhang,
Sharyl L. Wong, Giovanni Franklin, Siming Li, Joanna S. Albala, Janghoo Lim, Carlene Fraughton,
Estelle Llamosas, Sebiha Cevik, Camille Bex, Philippe Lamesch, Robert S. Sikorski, Jean Vandenhaute,
Huda Y. Zoghbi, Alex Smolyar, Stephanie Bosak, Reynaldo Sequerra, Lynn Doucette-Stamm, Michael E.
Cusick, David E. Hill, Frederick P. Roth, and Marc Vidal. 
Towards a proteome-scale map of the human
protein–protein interaction network. 
{\em Nature}, 437:1173,2005.



\bibitem{enzymes}
I. Schomburg, A. Chang, C. Ebeling, M. Gremse, C. Heldt, G. Huhn, and D. Schomburg. Brenda, the
enzyme database: updates and major new developments. 
{\em Nucleic Acids Research}, 32D:431--433, 2004.

\bibitem{weisfeiler_lehman}
 Nino Shervashidze, Pascal Schweitzer, Erik Jan van Leeuwen, Kurt Mehlhorn, and Karsten M. Borgwardt.
Weisfeiler-lehman graph kernels. {\em Journal of Machine Learning Research}, 12:2539--2561, 2011.

\bibitem{graphlet_kernel}
Nino Shervashidze, S. V. N. Vishwanathan, Tobias Petri, Kurt Mehlhorn, and Karsten M. Borgwardt.
Efficient graphlet kernels for large graph comparison. In {\em Proceedings of the Twelfth International
Conference on Artificial Intelligence and Statistics, AISTATS 2009}, pages 488--495, 2009.

\bibitem{nci1}
Nikil Wale and George Karypis. Comparison of descriptor spaces for chemical compound retrieval and
classification. 
In {\em Proceedings of the 6th IEEE International Conference on Data Mining (ICDM 2006)},
18--22 December 2006, Hong Kong, China, pages 678--689, 2006.

\bibitem{deep_graph_kernels}
Pinar Yanardag and S. V. N. Vishwanathan. Deep graph kernels. In Proceedings of the 21th ACM
SIGKDD International Conference on Knowledge Discovery and Data Mining, 2015, pages 1365–1374,
2015.

\end{thebibliography}

\end{document}